\definecolor{cvprblue}{rgb}{0.21,0.49,0.74}
\newtheorem{theorem}{Theorem}
\title{V-ITI: Mitigating Hallucinations in Multimodal Large Language Models via Visual Inference-Time Intervention}
\renewcommand{\thefootnote}{%
  \ifcase\value{footnote}\or *\or \dag\or \ddag\fi%
}
\author{
  \begin{tabular}{c}
    Nan Sun\textsuperscript{1,2\hyperref[fn:star]{*}\hyperref[fn:dag]{\dag}}, Zhenyu Zhang\textsuperscript{3\hyperref[fn:star]{*}}, Xixun Lin\textsuperscript{1,2\hyperref[fn:ddag]{\ddag}}, Kun Wang\textsuperscript{4}, Yanmin Shang\textsuperscript{1,2}
  \end{tabular}\\
  \begin{tabular}{c}
    Naibin Gu\textsuperscript{1,2}, Shuohuan Wang\textsuperscript{3}, Yu Sun\textsuperscript{3}, Hua Wu\textsuperscript{3}, Haifeng Wang\textsuperscript{3}, Yanan Cao\textsuperscript{1,2}
  \end{tabular}
  \\[5pt] 
  \begin{tabular}[t]{c} 
    \textsuperscript{1} Institute of Information Engineering, Chinese Academy of Sciences, Beijing, China \\
    \textsuperscript{2} School of Cyber Security, University of Chinese Academy of Sciences, Beijing, China \\
    \textsuperscript{3} Baidu Inc., Beijing, China \quad
    \textsuperscript{4} Nanyang Technological University\\
    \texttt{\small \{sunnan, linxixun\}@iie.ac.cn}\\
    \texttt{\small \{zhangzhenyu07,wangshuohuan\}@baidu.com}
  \end{tabular}
}
\begin{document}
\maketitle
\footnotetext[1]{\label{fn:star} Equal contribution.} 
\footnotetext[2]{\label{fn:dag} This work was done during an internship at Baidu Inc.} 
\footnotetext[3]{\label{fn:ddag} Corresponding author: Xixun Lin.} 

\begin{abstract}
Multimodal Large Language Models (MLLMs) excel in numerous vision-language tasks yet suffer from hallucinations, producing content inconsistent with input visuals, that undermine reliability in precision-sensitive domains. 
This issue stems from a fundamental problem of \textbf{visual neglect}, where models fail to adequately prioritize input images. 
Existing methods typically alleviate hallucinations by intervening in the attention score or output logits, focusing on ``how to intervene'' but overlooking the prerequisite ``when to intervene'', which leads to the \textbf{``over-intervention''} problem and subsequently introduces new hallucinations and unnecessary computational overhead. 
To address this gap, we first investigate the mechanism of visual neglect and reveal it can be accurately detected via head-level activation patterns in MLLMs. 
We thus propose V-ITI, a lightweight visual inference-time intervention framework integrating a Visual Neglect Detector that identifies visual neglect via head-level discriminative probes and a Visual Recall Intervenor that modulates activations with prestored visual activation information only when the visual neglect is detected. Extensive experiments across eight benchmarks and different MLLM families demonstrate that V-ITI consistently mitigates vision-related hallucinations while preserving general task performance.
\end{abstract}    
\begin{figure}[t]
\begin{center}
\includegraphics[width=\linewidth]{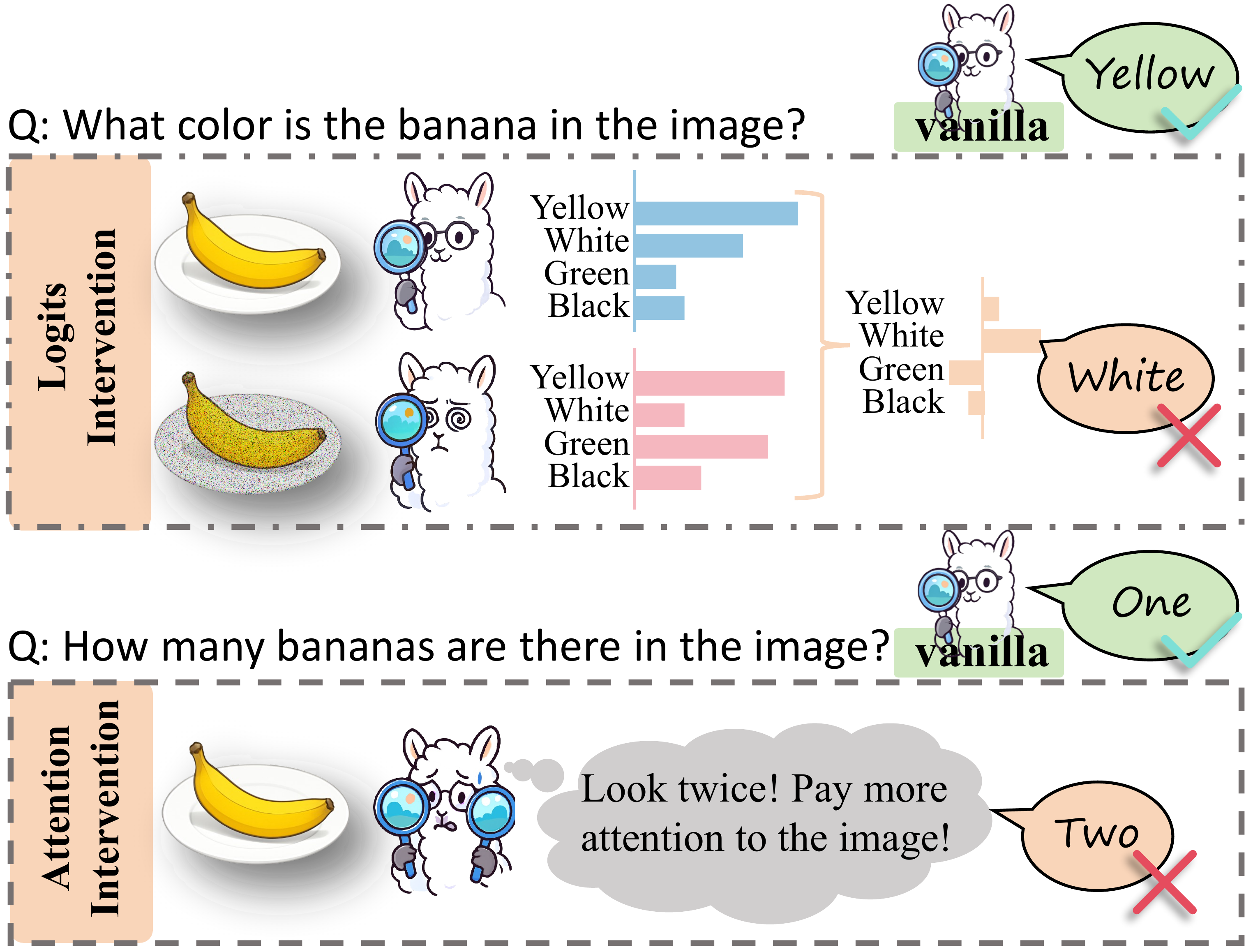}
\end{center}
\caption{Illustration of the phenomenon of ``over-intervention''. For logits intervention (upper), contrasting logits from perturbed and undisturbed visual inputs suppresses the correct answer’s logits, which forces error of answering \textit{``White''} instead of the correct color \textit{``Yellow''}. For attention intervention (lower), overly enhancing visual attention induces new quantity hallucinations, where the model initially correct in answering \textit{``One''}, mistakenly answers \textit{``Two''} due to repeated due to excessive attention.}
\label{fig1} 
\vspace{-15pt}
\end{figure}
\section{Introduction}
Multimodal Large Language Models (MLLMs)~\cite{zhang2024mm} have demonstrated impressive performance across a variety of vision-language tasks~\cite{zhou2023vision+}, such as image captioning and visual question answering. However, their reliability remains a major concern due to hallucinations~\cite{huang2024visual}---a critical flaw where models generate content inconsistent with input images, for example, fabricating non-existent objects or making factually incorrect inferences about visual details. Such hallucination issues are particularly unacceptable in safety-sensitive fields, such as biometrics~\cite{tiong2024flexible}, autonomous driving~\cite{cui2024survey}, and medical diagnosis~\cite{wu2023medklip}.
Recent studies~\cite{liu2024paying,zhang2025mllms,yin2025clearsight} have traced the root of hallucination to a phenomenon termed \textbf{\textit{Visual Neglect}}.
In response to this issue, some approaches adopt resource-intensive pipelines from fine-tuning (FT)~\cite{gunjal2024detecting}, reinforcement learning (RL)~\cite{yu2024rlhf} and retrieval-augmented generation (RAG)~\cite{qu2025alleviating}, which are difficult to replicate. This paper, therefore, focuses solely on lightweight interventions at the inference stage to re-emphasize the crucial role of visual components.

Mainstream intervention approaches can be roughly categorized into two types:
(1) \textbf{\textit{Logits Intervention}}, which modulates the logits for next-token prediction in a contrastive manner using perturbed visual inputs. For example, Visual Contrastive Decoding (VCD)~\cite{vcd2024cvpr} replaces original images with noised ones to obtain negative logits, then subtracts them from the original logits to enhance visual grounding during decoding. Instruction Contrastive Decoding (ICD) inherits this concept but applies the perturbation in the instruction part instead of the image.
(2) \textbf{\textit{Attention Intervention}}, which sharpens the model’s attention toward visual tokens. For instance, OPERA~\cite{zoulook} finds that during hallucination, attention shifts to 'anchor tokens' and applies a penalty to refocus attention on the visual tokens. INTER~\cite{dong2025inter} further improves OPERA's sampling by reapplying MLLMs' understanding of multimodal interaction information to force the model re-focus to the visual part.

However, most existing methods primarily fixate on the question of \textit{\textbf{``How to intervene''}} and enforce uniform intervention across all tokens and samples, yet overlook a fundamental prerequisite: \textit{\textbf{``When to intervene''}}. 
This omission results in unnecessary computational overhead and increases the risk of \textbf{\textit{“over-intervention”}}, that is, applying unnecessary interventions even when visual neglect is absent. Such indiscriminate intervention can introduce new hallucinations despite initially correct model predictions.
Take Fig.~\ref{fig1} as an example, for questions the model originally answers correctly, logits intervention may unnecessarily suppress the correct answer’s logits when they align with those from perturbed visual inputs, meanwhile, attention intervention might overly amplify visual focus, causing issues such as repeated counting. Consequently, many existing hallucination mitigation methods inadvertently degrade the general-task performance of MLLMs.

To explore the causes of ``over-intervention'' and bridge this gap, we begin by examining the core mechanisms behind hallucinations in MLLMs. 
As perturbation steps increase (Fig.~\ref{fig2}), both F1 score and accuracy consistently decline. Meanwhile, the attention allocated to visual tokens overall diminishes, and within the subset of visual tokens, the model shifts from focusing on question-relevant regions to failing to capture valid information. These trends reflect the onset of visual neglect, ultimately leading to degraded performance.
These observations suggest that attention patterns can serve as a signal to detect visual neglect, offering a natural way to determine when intervention is necessary.

\begin{figure}[t]
\begin{center}
\includegraphics[width=\linewidth]{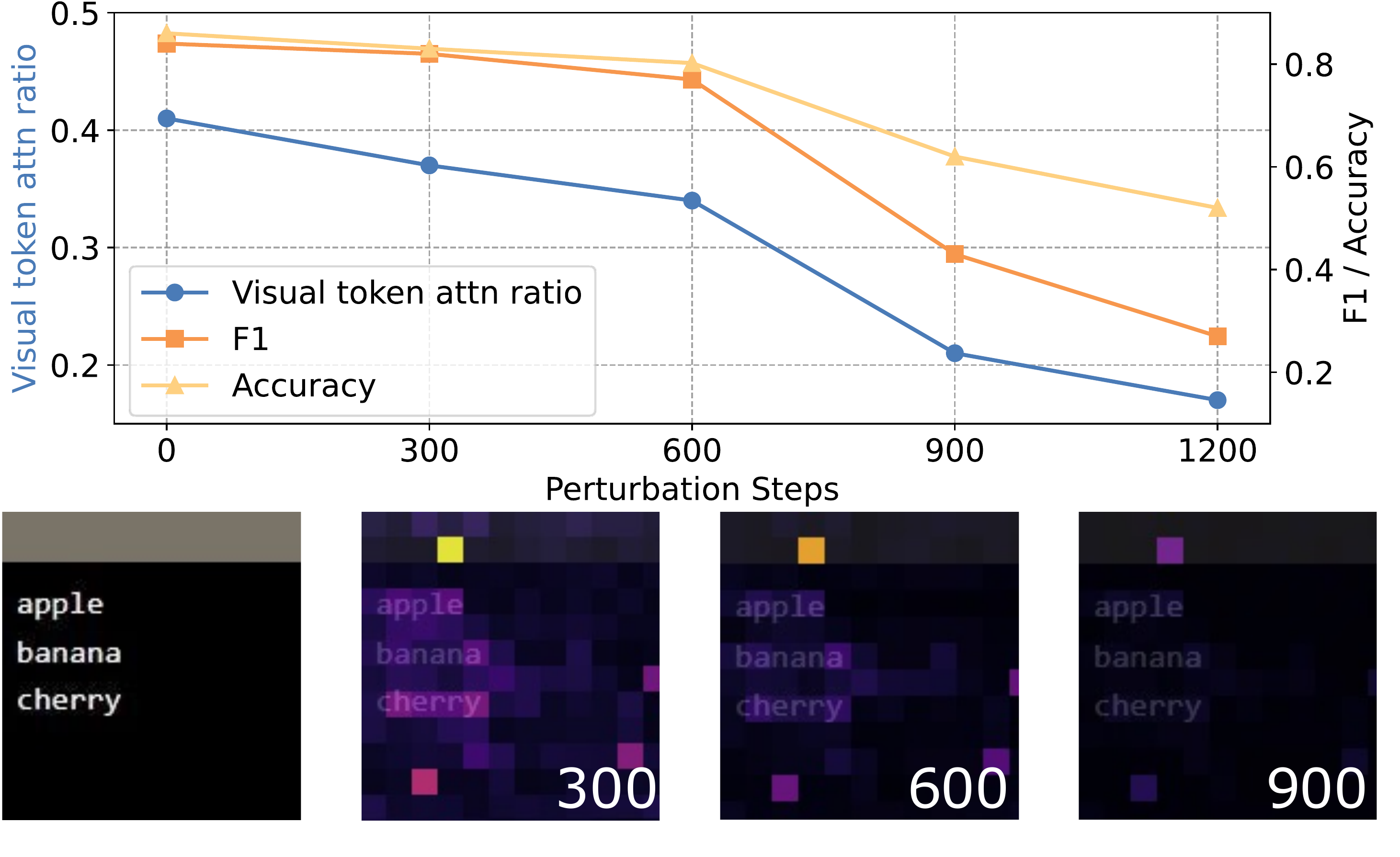}
\end{center}
\caption{As Gaussian perturbations increase, the model’s attention to visual tokens diminishes, and the attention heatmap reveals a loss of focus on question-relevant regions. This visual neglect weakens the model’s ability to perceive visual evidence, ultimately leading to a decline in accuracy and F1 score on the POPE dataset.}
\label{fig2}
\vspace{-15pt} 
\end{figure}
Building on this insight, we propose V-ITI, a lightweight inference-time intervention framework designed to mitigate hallucinations in MLLMs by addressing the two key questions: \textbf{\textit{“When and how to intervene?”}}. Specifically, V-ITI consists of two pivotal modules:
(1) \textbf{\textit{Visual Neglect Detector}}, which determines when to intervene. Here we train a set of head-level probes using paired perturbed and clean images to detect visual neglect through activation discrepancies. During inference, these probes identify neglect instances via the alignment between probe directions and head level activations and gate the subsequent module.
(2) \textbf{\textit{Visual Recall Intervenor}}, which decides how to intervene. It integrates current head activations with retained visual attention weights to reinforce attention toward relevant visual tokens, ensuring the model maintains focus on informative visual regions.
Extensive experiments demonstrate the effectiveness of V-ITI, showing consistent improvements in both hallucination mitigation and general multimodal capabilities across eight public benchmarks and different MLLM families.
Overall, our contributions are threefold: 
\begin{itemize}
    \item \textbf{Problem Disclose.} We uncover an overlooked issue in hallucination mitigation for MLLMs, whether visual neglect truly occurs, and provide a systematic analysis of its underlying causes to guide targeted intervention design.
    \item \textbf{Intervention Paradigm.} We introduce V-ITI, the first visual inference-time intervention framework that systematically addresses both when and how to intervene. By integrating a Visual Neglect Detector and a Visual Recall Intervenor, it enables selective and adaptive intervention based on detected visual neglect.
    \item \textbf{Empirical Evaluation.} Extensive experiments across different MLLM families and benchmarks demonstrate that V-ITI effectively mitigates hallucinations while maintaining strong general multimodal capabilities.
\end{itemize}

\section{Related Works}
\noindent\textbf{\textit{Multimodal Large Language Models (MLLMs).}} Early MLLMs employed BERT-analogous models for basic cross-modal fusion, where the feature spaces of different modalities were distinct and only loosely aligned. With the development of LLMs, the integration of open-source models like the LLaMA family enabled greater adaptability, fostering more robust interactions between modalities and improving MLLMs' capabilities. Notable models reflecting this progression include LLaVA~\cite{liu2024visual}, Qwen-VL~\cite{bai2023qwen}, GLM4V~\cite{wang2023cogvlm}, and LLaVA-Next~\cite{liu2024llavanext}. Contemporary MLLMs are typically trained in two stages: i) cross-modal pretraining for feature alignment and ii) instruction tuning for task-specific adaptability. While these models perform well, hallucination remains a significant challenge, limiting their reliability in real-world applications.

\noindent\textbf{\textit{Hallucination Mitigation in MLLMs.}} 
Early attempts in mitigating hallucinations in MLLMs centered on refining modality alignment \cite{rohrbach2018object} and curbing biases \cite{kim2023exposing}, with recent advances spanning FT~\cite{gunjal2024detecting}, RL~\cite{yu2024rlhf} and RAG~\cite{yu2024rlhf}. However, most of these methods incur substantial resource costs. Two high-performance inference-time paradigms have emerged. The first is \textbf{\textit{Logits Intervention}}, which adjusts output logits distributions represented by contrastive decoding (CD) \cite{chuang2023dola}, VCD \cite{vcd2024cvpr}, and ICD \cite{wang2024mitigating}. This paradigm often fails to guarantee consistent performance due to unintended noise introduced during decoding. The second is \textbf{\textit{Attention Intervention}}, which modulates attention to enhance visual relevance. A representative lightweight method is OPERA \cite{opera2024cvpr}, penalizing incorrect attention aggregation in beam search. INTER \cite{dong2025inter} further refines its sampling by reusing MLLMs’ multimodal interaction comprehension.
\section{Preliminary}\label{sec:preliminary}
Given a MLLM $\mathcal{M}_{\boldsymbol{\Phi}}$ with parameters \( \boldsymbol{\Phi} \), we adopt the standard input template \textsf{[}$\langle$\textbf{text-system}$\rangle$ $\langle$\textbf{vision-image}$\rangle$ $\langle$\textbf{text-instruction}$\rangle$\textsf{]} for processing queries, where both textual and visual tokens are projected into a unified feature space of dimension \( HD \). Here, \( H \) is the number of attention heads, \( D \) represents the dimensionality of each head. Consequently, the input embedding sequence of length \( n \) at the \( l \)-th layer is represented as $\boldsymbol{X}_{l} = [\boldsymbol{x}_l^1, \dots, \boldsymbol{x}_l^{v_s}, \dots, \boldsymbol{x}_l^{v_e}, \dots, \boldsymbol{x}_l^n] \in \mathbb{R}^{n \times HD}$
where the embedding \( \boldsymbol{x}_l^{v_s:v_e}=\{ \boldsymbol{x}_l^i \}_{i=v_s}^{v_e} \) correspond to the vision tokens, while the rest of the tokens represent the textual components. The index \( l \) ranges from 0 to \( L-1 \), where \( L \) denotes the total number of transformer layers. Specifically, \( \boldsymbol{X}_0 \) corresponds to the initial embeddings.
Taking multi-head attention (MHA) as an example, the input embeddings \( \boldsymbol{X}_{l} \) at the \( l \)-th layer are projected into query (\(\boldsymbol{Q}^h \)), key (\(\boldsymbol{K}^h \)), and value (\(\boldsymbol{V}^h \)) matrices. The \( h \)-th attention head performs the following operation:
\begin{equation}
\small
    \boldsymbol{Q}^h = \boldsymbol{X}_{l} \boldsymbol{W}_Q^h, \quad \boldsymbol{K}^h = \boldsymbol{X}_{l} \boldsymbol{W}_K^h, \quad \boldsymbol{V}^h = \boldsymbol{X}_{l} \boldsymbol{W}_V^h,
\end{equation}
where \( \boldsymbol{W}_Q^h, \boldsymbol{W}_K^h, \boldsymbol{W}_V^h \in \mathbb{R}^{DH \times D} \) are learned weight matrices for the \( h \)-th attention head, \( \boldsymbol{Q}^h, \boldsymbol{K}^h, \boldsymbol{V}^h \) are in \( \mathbb{R}^{n \times D} \), and the layer index \( l \) is omitted here for convenience.
In that case, the attention scores \( \boldsymbol{A}_l^h \in \mathbb{R}^{n \times n} \) are computed as
\begin{equation}
\small
\boldsymbol{A}_l^h = \text{softmax} \left( \boldsymbol{Q}^h (\boldsymbol{K}^h)^\top/\sqrt{D} \right). 
\end{equation}
The head-wise activation \( \boldsymbol{o}_l^h \in \mathbb{R}^{n \times D} \) of the \( h \)-th head for layer \( l \) is then computed as
\begin{equation}
\small
\label{eq:3}
\boldsymbol{o}_l^h = \boldsymbol{A}_l^h \boldsymbol{V}^h,
\end{equation}
where \( \boldsymbol{V}^h \in \mathbb{R}^{n \times D} \) is the value matrix for the \( h \)-th head, and \( \boldsymbol{o}_l^h \in \mathbb{R}^{n \times D} \) is the output of the \( h \)-th attention head for layer \( l \).  
After processing all attention heads, the outputs are concatenated and projected back to the original space
$\boldsymbol{O}_l = \text{concat}(\boldsymbol{o}_l^1, \dots, \boldsymbol{o}_l^H) \boldsymbol{W}_O$.
After the attention mechanism, the output \( \boldsymbol{o}_l \in \mathbb{R}^{n \times DH} \) is passed through a position-wise feed-forward network (FFN) consisting of two linear layers
$\operatorname{FFN}(\boldsymbol{O}_l) = \phi \left( \boldsymbol{O}_l \boldsymbol{W}_1 \right) \boldsymbol{W}_2^{\top}$,
where \( \phi \) is an activation function like SiLU~\cite{liu2020evolving}, and \( \boldsymbol{W}_1, \boldsymbol{W}_2 \in \mathbb{R}^{DH \times mDH} \) are the weight matrices, with \( m = 4 \) typically. The output of the FFN is then passed through layer normalization and residual connections, resulting in \( \boldsymbol{X}_{l+1} \), which serves as the input to the next layer.
After all the layers, we use the output of the last token for prediction. The answer \(Y = \left[ y^1, y^2, \dots, y^T \right]\) is sampled auto-regressively and the \(t\)-th
token follows the probability distribution \( f_{\boldsymbol{\Phi}} \) conditioned on the query:
\begin{equation}
\small
y^t \sim f_{\boldsymbol{\Phi}} (y^t \mid \boldsymbol{X}_0, y^{1:t-1}))\propto \exp \left( \text{logit}_{\boldsymbol{\Phi}} (y^t \mid \boldsymbol{X}_0, y^{1:t-1}) \right).
\end{equation}
\section{Method}\label{sec:method}
Fig.~\ref{fig3} presents the overall architecture of V-ITI, which integrates a Visual Neglect Detector (Sec.~\ref{sec:VND}) and a Visual Recall Intervenor (Sec.~\ref{sec:VRI}) to effectively mitigate hallucinations while avoiding over-intervention. Furthermore, we provide a theoretical analysis (Sec.~\ref{sec:TA}) of V-ITI to better understand its underlying mechanisms and effectiveness.
\begin{figure*}[h]
\begin{center}
\includegraphics[width=0.9\linewidth]{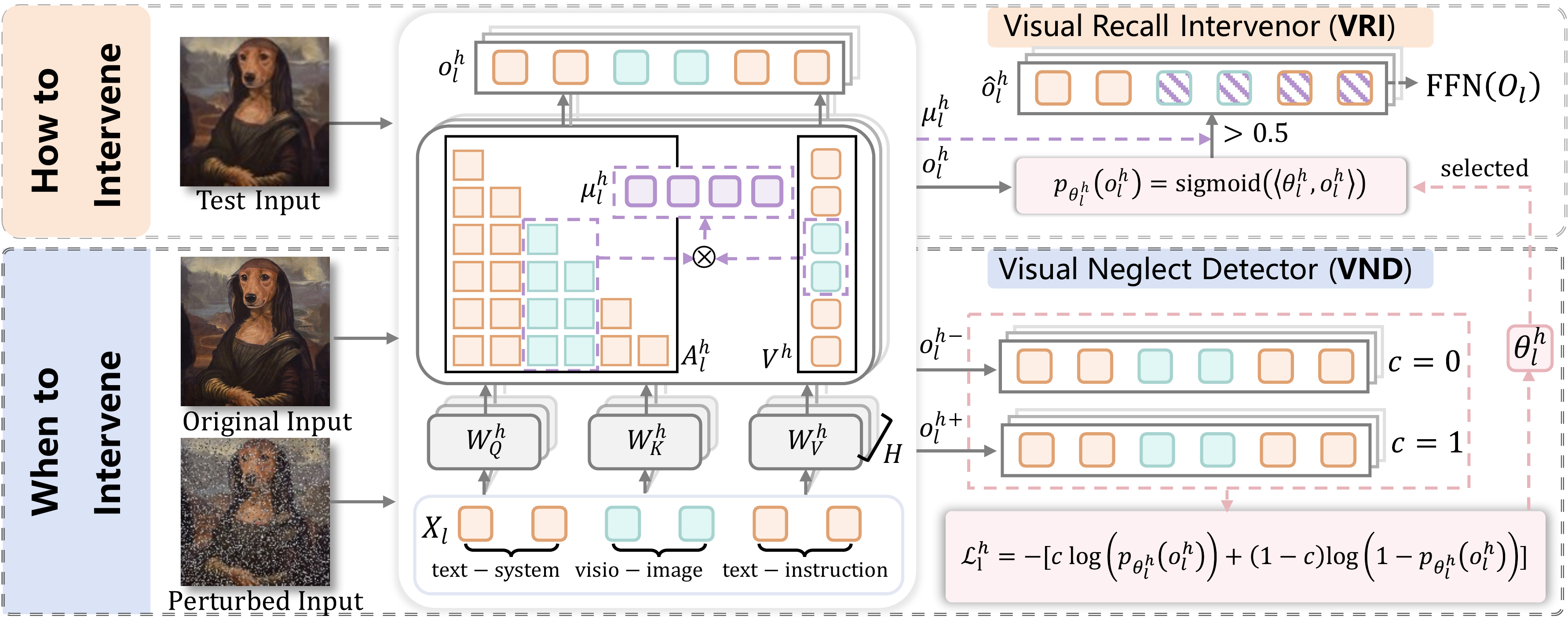}
\end{center} 
\caption{Illustration of the overall V-ITI architecture. To avoid over-intervention in hallucination mitigation, we propose two modules. The Visual Neglect Detector (VND) determines when intervention is needed by discriminating head-level activation patterns, while the Visual Recall Intervenor (VRI) addresses how to intervene by integrating the original head output with retained visual activations.}
\label{fig3}
\end{figure*}
\subsection{Visual Neglect Detector}\label{sec:VND}
Recent works on inference-time intervention in the NLP field have shown that head-level probes in LLMs can identify directions in the activation space that provide interpretable evidence of whether the model is ``telling the truth''~\cite{li2023inference}.
Building on this insight, we focus on identifying head-level activation directions indicative of visual neglect in multimodal scenario. To this end, we propose a \textbf{Visual Neglect Detector (VND)}, which trains head-level probes to pinpoint activation patterns associated with visual neglect. By analyzing the alignment between probe directions and model activations, the detector identifies when visual neglect occurs and determines  ``when to intervene''.

\noindent\textbf{Setup.} To enable the probes to detect visual neglect, we designed two types of perturbations to construct positive samples. The first perturbation simulates a scenario where the model is unsure of where to focus by adding Gaussian noise for 1000 steps. The second perturbation simulates a situation where the model excessively focuses on irrelevant areas by replacing the embeddings corresponding to the top 75\% of attention with smoothed embeddings from the remaining 25\%. All of these operations are based on data from the instruction tuning stage of LLaVA's training, which are considered as negative data and includes 150K GPT-generated samples and 515K VQA data from academic tasks. 

\noindent\textbf{Detector Architecture.} VND is essentially a reformulation of MHA. Let each labeled sample be denoted as \( (\boldsymbol{X}, c) \), where \( c \in \{0, 1\} \) indicates the presence (1) or absence (0) of visual neglect. For each sample, we obtain the corresponding head-wise activations \( \boldsymbol{o}_l^h \) from the MLLM as defined in Eq.~\ref{eq:3}. Specifically, our detector consists of a set of linear probes at each layer, denoted by \( \boldsymbol{\theta} = \{\theta_{l}^h : h = 1, \dots, H\}_{l=0}^{L-1} \), where \( \theta_{l}^h \in \mathbb{R}^{D} \). These probes are then dot-multiplied with the corresponding head-wise activations \( \boldsymbol{o}_l^h \) to produce a binary classification result. We define the output at the \( l \)-th layer and \( h \)-th head as:
\begin{equation}
\small
p_{\theta_l^h}(\boldsymbol{o}_l^h) = \text{sigmoid}\left( \langle \theta_l^h, \boldsymbol{o}_l^h \rangle \right).
\end{equation}
For each labeled sample \( (\boldsymbol{X}, c) \), the binary cross-entropy loss \( \mathcal{L}_l^h \) is given by:
\begin{equation}
\small
\mathcal{L}_l^h = - \left[ c \log(p_{\theta_l^h}(\boldsymbol{o}_l^h)) + (1 - c) \log(1 - p_{\theta_l^h}(\boldsymbol{o}_l^h)) \right].
\end{equation}
The total loss for the probe \( \theta_{l}^h \) is computed by averaging the loss across the entire training dataset. Taking the training results in Fig.~\ref{fig4} on LLaVA-1.5 for example, we sort the accuracy of the probes and observe two key findings: First, all head probes achieve accuracy greater than 50\% with peak detection performance reaching 87.6\%, demonstrating that visual neglect can be effectively detected. Second, the distribution of high-performing probes is sparse and primarily concentrated in the middle layers. This insight informs our approach to adaptively select probes in a sparse manner, prioritizing those with higher accuracy. Similar observations have been made in other MLLMs as well. These findings provide valuable guidance on how to effectively leverage accuracy for global selection of when to intervene.

\begin{figure}[t]
\begin{center}
\includegraphics[width=0.85\linewidth]{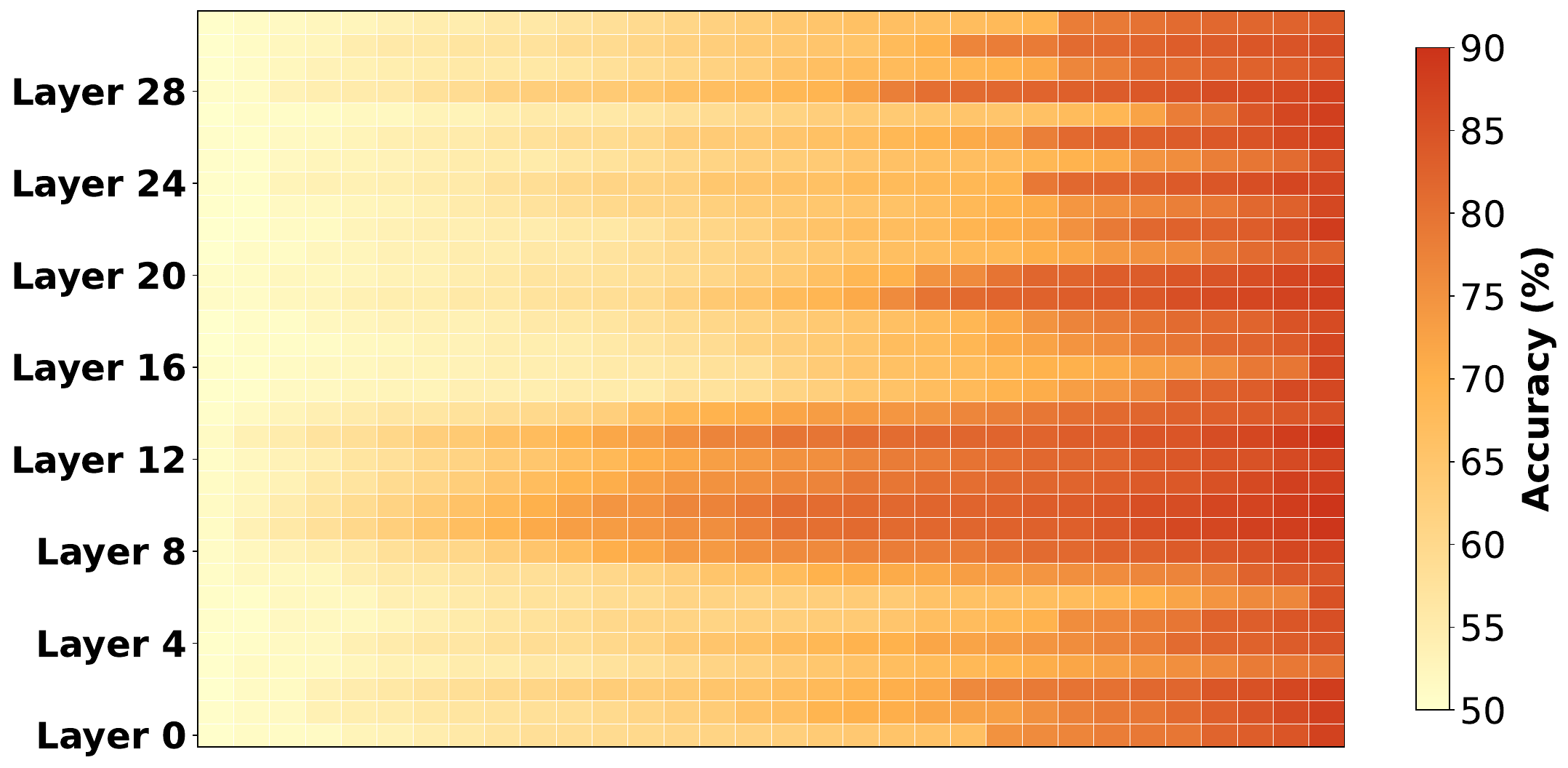}
\end{center} 
\caption{Sorted probe accuracies on validation set of all attention heads across all layers in the LLaVA-1.5 model.}
\label{fig4}
\end{figure}

\subsection{Visual Recall Intervenor}\label{sec:VRI}
When the head exhibits visual neglect, the current activation states require supplementation with visual information to mitigate hallucinations. To address this need, our \textbf{Visual Recall Intervenor(VRI)} introduce an additional visual activation component \( \boldsymbol{\mu}_l^h \) during MHA computation and store this component for subsequent intervention. Specifically,
the visual activation \( \boldsymbol{\mu}_l^h \) is computed by restricting attention to vision tokens in range \([v_s, v_e]\), with re-normalized weights to ensure valid summation as
\begin{equation}
\small
\label{eq:visual_activation}
\boldsymbol{\mu}_l^h = \left( \frac{\boldsymbol{A}_l^h[:, v_s:v_e]}{\sum_{j=v_s}^{v_e} \boldsymbol{A}_l^h[:, j] + \epsilon} \right) \cdot \boldsymbol{V}^h[v_s:v_e],
\end{equation}
where \( \boldsymbol{A}_l^h[:, v_s:v_e] \) and \( \boldsymbol{V}^h[v_s:v_e] \) denote the submatrices of \( \boldsymbol{A}_l^h \) and \( \boldsymbol{V}^h \) corresponding to vision tokens, and \( \epsilon \) is a small constant to avoid division by zero.

We model visual recall intervention process as a gating mechanism, where the detector probe probability \( p_{\theta_l^h}(\boldsymbol{o}_l^h) \in [0,1] \) serves as the trigger signal. The final modulated head-wise activation \( \boldsymbol{\hat{o}}_l^h \) is thus computed as a weighted combination of the original activation \( \boldsymbol{o}_l^h \) and the visual activation \( \boldsymbol{\mu}_l^h \):
\begin{equation}
\small
\label{eq:gated_intervention}
\boldsymbol{\hat{o}}_l^h =\begin{cases}(1 - \alpha) \cdot \boldsymbol{o}_l^h + \alpha \cdot \boldsymbol{\mu}_l^h, & p_{\theta_l^h}(\boldsymbol{o}_l^h) > 0.5, \\ \boldsymbol{o}_l^h, & p_{\theta_l^h}(\boldsymbol{o}_l^h) \leq 0.5,
\end{cases}\end{equation}
where $\alpha$ is the weight parameter.
To fully leverage the continuous confidence \( p_{\theta_l^h}(\boldsymbol{o}_l^h) \) as an indicator of the severity of visual neglect, we design the weight parameter as 
\begin{equation}
\small
\alpha = \alpha_0 \cdot p_{\theta_l^h}(\boldsymbol{o}_l^h),
\end{equation}
where \( \alpha_0  \in (0,1] \) denotes the intervention strength.
Given that VND has identified the most sensitive attention heads which are sparsely distributed, we select the top-\( \beta \) accuracy heads to control VRI's intervention, where \( \beta \in [0,1] \) and \( \beta = 1 \) corresponds to full intervention across all heads. Intervention is thus restricted to these high-risk visual neglect regions to mitigate hallucinations. When the associated detector probe identifies visual neglect during the current inference step, the pre-stored visual activation \( \boldsymbol{\mu}_l^h \) is employed to modulate the model's activation states.

\begin{algorithm}[t]
\small
\caption{V-ITI with Global Top-\( \beta \) Head Selection}
\label{alg:VRN}
\textbf{Require:} MLLM $\mathcal{M}_{\boldsymbol{\Phi}}$, input at $0$-th layer $\boldsymbol{X}_{0} = [\boldsymbol{x}_0^1, \dots, \boldsymbol{x}_0^{v_s}, \dots, \boldsymbol{x}_0^{v_e}, \dots, \boldsymbol{x}_0^n]$, probes $\{\theta_l^h\}$, intervention strength $\alpha_0$, head selection parameter $\beta \in [0,1]$. \\
\textbf{Output:} Model response $Y = [y^1, y^2, \dots, y^T]$.\\
\textbf{Layer-wise Processing (for $l = 0$ to $L-1$):}
\begin{enumerate}
    \item Compute $\boldsymbol{o}_l^h$ (via Eq.~\eqref{eq:3}) and $\boldsymbol{\mu}_l^h$ (via Eq.~\eqref{eq:visual_activation}).
    \item For each head \( h \) in the top-\( \beta \) selected heads:
    \begin{enumerate}
        \item If \( p_{\theta_l^h}(\boldsymbol{o}_l^h) > 0.5 \): \\
            $\boldsymbol{\hat{o}}_l^h = (1 - \alpha) \cdot \boldsymbol{o}_l^h + \alpha \cdot \boldsymbol{\mu}_l^h$.
        \item Else: $\boldsymbol{\hat{o}}_l^h = \boldsymbol{o}_l^h$.
    \end{enumerate}
    \item For other heads not in the top-\( \beta \) selection, set $\boldsymbol{\hat{o}}_l^h = \boldsymbol{o}_l^h$.
    \item Layer output: $\boldsymbol{O}_l = \text{concat}(\boldsymbol{\hat{o}}_l^1, \dots, \boldsymbol{\hat{o}}_l^H) \boldsymbol{W}_O$.
    \item For the next layer: $\boldsymbol{X}_{l+1} = \text{FFN}(\boldsymbol{O}_l) + \boldsymbol{X}_l$.
\end{enumerate}
\textbf{Decoding:} Sample $Y$ auto-regressively from final layers.
\end{algorithm}

\subsection{Theoretical Analysis}\label{sec:TA}
\noindent\textbf{Theoretical Guarantees.}
To theoretically verify that V-ITI strengthens the correlation between model activations and visual tokens, we analyze the Mutual Information (MI) between head-wise activations and visual token subsets before and after intervention.
\begin{theorem}
Let \( \boldsymbol{o}_l^h \) denote the original head-wise activation and \( \boldsymbol{\hat{o}}_l^h \) denote the modulated activation after intervention. Let \( \boldsymbol{X}[v_s:v_e] \subseteq \boldsymbol{X}_l \) be the visual subset of input tokens at $l$-th level. The MI between them satisfies
\begin{equation}\label{theorem:1}
\small
I\left( \boldsymbol{\hat{o}}_l^h; \boldsymbol{X}[v_s:v_e] \right) \geq I\left( \boldsymbol{o}_l^h; \boldsymbol{X}[v_s:v_e] \right).
\end{equation}
\end{theorem}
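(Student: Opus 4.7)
The plan is to reduce Eq.~(10) to a monotonicity statement about mutual information along the convex interpolation between the original head activation and the visual-only activation. A case analysis on the gate in Eq.~(8) is the natural first step: when $p_{\theta_l^h}(\boldsymbol{o}_l^h) \leq 0.5$ we have $\boldsymbol{\hat{o}}_l^h = \boldsymbol{o}_l^h$ and the two mutual informations coincide, so only the on-gate regime $\boldsymbol{\hat{o}}_l^h = (1-\alpha)\boldsymbol{o}_l^h + \alpha \boldsymbol{\mu}_l^h$ with $\alpha \in (0,\alpha_0]$ is nontrivial.

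The core auxiliary step I would prove is $I(\boldsymbol{\mu}_l^h; \boldsymbol{X}[v_s:v_e]) \geq I(\boldsymbol{o}_l^h; \boldsymbol{X}[v_s:v_e])$. The construction in Eq.~(7) restricts the value matrix to visual positions and renormalizes the attention weights over $[v_s, v_e]$, so $\boldsymbol{\mu}_l^h$ factors cleanly through the visual input pathway and drops the text-pathway value contribution present in $\boldsymbol{o}_l^h$. Concretely, I would decompose $\boldsymbol{o}_l^h = Z\,\boldsymbol{\mu}_l^h + \boldsymbol{r}_l^h$, where $Z$ is the visual-mass normalizer and $\boldsymbol{r}_l^h$ is the text-pathway residual $\sum_{i \notin [v_s,v_e]} \boldsymbol{A}_l^h[:,i]\,\boldsymbol{V}^h[i]$, and then apply the data processing inequality along $\boldsymbol{X}[v_s:v_e] \to \boldsymbol{\mu}_l^h \to \boldsymbol{o}_l^h$, with the last step corresponding to the additive injection of the text-pathway residual.

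With the auxiliary inequality in hand I would lift it to the linear combination. Under a Gaussian-linear working model on the head activations, $I(\boldsymbol{Z}_\alpha; \boldsymbol{X}[v_s:v_e])$ with $\boldsymbol{Z}_\alpha = (1-\alpha)\boldsymbol{o}_l^h + \alpha \boldsymbol{\mu}_l^h$ admits a closed form in terms of $\mathrm{Cov}(\boldsymbol{Z}_\alpha, \boldsymbol{X}[v_s:v_e])$, from which the monotonicity in $\alpha$ can be read off directly. A distribution-free alternative is to show $\partial_\alpha I(\boldsymbol{Z}_\alpha; \boldsymbol{X}[v_s:v_e])\big|_{\alpha=0} \geq 0$ whenever the auxiliary inequality holds, and then invoke continuity together with the endpoint value at $\alpha=1$ to conclude.

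The hard part will be this lifting step, because mutual information is in general neither convex nor concave in convex combinations of random variables, so a fully rigorous chain of inequalities requires either an explicit Gaussian-linear assumption on the head activations or a conditional-independence structure between $\boldsymbol{r}_l^h$ and $\boldsymbol{X}[v_s:v_e]$. I would therefore state whichever assumption is used as an explicit hypothesis of the theorem, motivate it from the activation-geometry evidence summarized in Fig.~2 and Sec.~4.1, and then assemble the proof by combining the case split, the auxiliary DPI bound, and the interpolation monotonicity.
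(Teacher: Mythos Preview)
Your route differs substantially from the paper's. The paper does not go through an endpoint auxiliary inequality or an interpolation-monotonicity argument at all. Instead it rewrites $\boldsymbol{\hat{o}}_l^h = \boldsymbol{o}_l^h + \alpha(\boldsymbol{\mu}_l^h - \boldsymbol{o}_l^h)$ and invokes an additivity identity $I(a+b; z) = I(a; z) + I(b; z \mid a)$ with $a = \boldsymbol{o}_l^h$, $b = \alpha(\boldsymbol{\mu}_l^h - \boldsymbol{o}_l^h)$, $z = \boldsymbol{X}[v_s{:}v_e]$, reducing the theorem to the non-negativity of the conditional mutual information $I\bigl(\alpha(\boldsymbol{\mu}_l^h - \boldsymbol{o}_l^h); \boldsymbol{X}[v_s{:}v_e] \mid \boldsymbol{o}_l^h\bigr)$, which it then argues is non-negative because $\boldsymbol{\mu}_l^h - \boldsymbol{o}_l^h$ retains the visual component while removing only non-visual parts. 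This is much shorter and sidesteps the convex-combination difficulty you worry about, but the stated identity is really the chain rule for the \emph{joint} $(a,b)$ rather than the sum $a+b$, so it is tacitly assuming that $a+b$ determines $(a,b)$ (or at least loses no information about $z$); the paper does not surface this assumption. Your plan, by contrast, separates the argument into a DPI step for the endpoints and a lifting step, and you are explicit that the lifting requires either a Gaussian-linear model or a conditional-independence hypothesis on the text-pathway residual. What you gain is transparency about where assumptions enter; what the paper gains is brevity by folding everything into a single conditional-MI non-negativity claim. If you want to align with the paper, drop the interpolation analysis and instead argue directly that $I(\boldsymbol{\hat{o}}_l^h; \boldsymbol{X}[v_s{:}v_e]) - I(\boldsymbol{o}_l^h; \boldsymbol{X}[v_s{:}v_e])$ equals a conditional mutual information term, being careful to state when the sum-to-joint replacement is legitimate.
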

\begin{proof}
Using the chain rule of mutual information, we substitute modulated activation $\boldsymbol{\hat{o}}_l^h$ via Eq.~\eqref{eq:gated_intervention} and rearrange the terms to obtain
\begin{equation}
\small
\begin{split} 
I\left( \boldsymbol{\hat{o}}_l^h; \boldsymbol{X}[v_s:v_e] \right) = I\left( (1-\alpha)\boldsymbol{o}_l^h + \alpha\boldsymbol{\mu}_l^h; \boldsymbol{X}[v_s:v_e] \right)\\
= I\left( \boldsymbol{o}_l^h + \alpha\left(\boldsymbol{\mu}_l^h-\boldsymbol{o}_l^h\right); \boldsymbol{X}[v_s:v_e] \right).
\end{split}
\end{equation}
Comparing with the original MI of $\boldsymbol{o}_l^h$, we apply the additivity property of mutual information for any random variables $a, b, z$: $I(a+b; z) = I(a; z) + I(b; x \mid a)$. We let $a = \boldsymbol{o}_l^h$, $b = \alpha(\boldsymbol{\mu}_l^h - \boldsymbol{o}_l^h)$, and $x = \boldsymbol{X}[v_s:v_e]$. The theorem is thus equivalent to proving
\begin{equation}\label{miuo}
\small
I\left( \alpha\left(\boldsymbol{\mu}_l^h - \boldsymbol{o}_l^h\right); \boldsymbol{X}[v_s:v_e] \mid \boldsymbol{o}_l^h \right) \geq 0.
\end{equation}
Since \( \boldsymbol{\mu}_l^h \) is constructed by normalizing the attention weights corresponding to the visual tokens \( \boldsymbol{X}[v_s:v_e] \), the normalization factor is clearly greater than 1. As a result, subtracting \( \boldsymbol{o}_l^h \) from \( \boldsymbol{\mu}_l^h \) retains the positive direction corresponding to the visual tokens and removes the non-visual components, which are independent of \( \boldsymbol{X}[v_s:v_e] \).
Thus, the difference \( \boldsymbol{\mu}_l^h - \boldsymbol{o}_l^h \) still carries information about the visual region \( \boldsymbol{X}[v_s:v_e] \) after subtracting the non-visual, independent part. Since \( \alpha \in (0, 1] \) is strictly positive, the conditional MI Eq.~\eqref{miuo} is guaranteed to be non-negative. The equality holds only when no intervention is applied. 
\end{proof}
\begin{table*}[t]
\centering
\caption{Hallucination mitigation results based on LLaVA-1.5-7B evaluated by Accuracy and F1-score on the POPE benchmark across three evaluation settings (\emph{Random}, \emph{Popular}, \emph{Adversarial}) and \emph{Average}. The best and second-best results are in \textbf{bold} and \underline{underlined}. The results in the table are sourced from the corresponding articles and all hallucination experiments on Qwen-VL are in Appendix~\ref{apx:C}.}
\resizebox{1.0\linewidth}{!}{
\begin{tabular}{l l c c c c c c c c}
\toprule
\multirow{2}{*}{Evaluation} 
& \multirow{2}{*}{Methods} 
& \multicolumn{2}{c}{Random}  
& \multicolumn{2}{c}{Popular} 
& \multicolumn{2}{c}{Adversarial} 
& \multicolumn{2}{c}{Average} \\ 
\cmidrule(r){3-4} \cmidrule(lr){5-6} \cmidrule(lr){7-8} \cmidrule(l){9-10}
&  
& Accuracy $\uparrow$ & F1-score $\uparrow$ 
& Accuracy $\uparrow$ & F1-score $\uparrow$  
& Accuracy $\uparrow$ & F1-score $\uparrow$  
& Accuracy $\uparrow$ & F1-score $\uparrow$  \\
\midrule
\multirow{6}{*}{MSCOCO}
& LLaVA-1.5-7B                    & 83.49 & 82.28 & 79.98 & 79.34 & 76.03 & 76.26 & 79.83 & 79.29 \\
& + VCD \citep{vcd2024cvpr}       & 86.84 & 86.83 & 82.65 & 83.37 & 77.31 & 79.28 & 82.27 & 83.16 \\
& + ICD \citep{wang2024mitigating}& 84.87 & 83.27 & 82.93 & 81.45 & 81.07 & 79.96 & 82.96 & 81.56 \\
& + OPERA \citep{opera2024cvpr}   & \underline{87.53} & \underline{86.45} & \underline{84.21} & 83.50 & 80.88 & 80.69 & \underline{84.21} & 83.55 \\
& + INTER~\cite{dong2025inter}    & 86.21 & 85.77 & 83.05 & \underline{84.34} & \underline{81.15} & \textbf{87.02} & 83.47 & \textbf{85.71} \\
\rowcolor{gray!20}
\cellcolor{white}& + V-ITI (ours) & \textbf{89.74} & \textbf{87.72} & \textbf{84.96} & \textbf{84.77} & \textbf{86.31} & \underline{82.44} & \textbf{87.00} & \underline{84.98} \\
\midrule
\multirow{6}{*}{A-OKVQA}
& LLaVA-1.5-7B                    & 83.45 & 82.56 & 79.90 & 79.59 & 74.04 & 75.15 & 79.13 & 79.10 \\
& + VCD \citep{vcd2024cvpr}       & 86.15 & 86.34 & 81.85 & 82.82 & 74.97 & 77.73 & 80.99 & 82.30 \\
& + ICD \citep{wang2024mitigating}& 85.57 & 85.06 & 81.93 & 81.95 & 77.43 & 78.99 & 81.64 & 82.00 \\
& + OPERA \citep{opera2024cvpr}   & 88.27 & \underline{87.54} & 85.17 & \underline{84.74} & 79.37 & \underline{79.97} & 84.27 & \underline{84.08} \\
& + INTER~\cite{dong2025inter}    & \underline{89.65} & 85.54 & \underline{85.44} & 83.98 & \textbf{81.05} & 78.28 & \underline{85.38} & 82.60 \\
\rowcolor{gray!20}
\cellcolor{white}& + V-ITI (ours) & \textbf{91.51} & \textbf{91.04} & \textbf{87.47} & \textbf{86.97} & \underline{80.34} & \textbf{81.12} & \textbf{86.44} & \textbf{86.37} \\
\midrule
\multirow{6}{*}{GQA}
& LLaVA-1.5-7B                    & 83.73 & 82.95 & 78.17 & 78.37 & 75.08 & 76.06 & 78.99 & 79.13 \\
& + VCD \citep{vcd2024cvpr}       & 86.65 & 86.99 & 80.73 & 82.24 & 76.09 & 78.78 & 81.16 & 82.67 \\
& + ICD \citep{wang2024mitigating}& 84.90 & 84.22 & 78.37 & 78.81 & 75.97 & 76.93 & 79.75 & 79.99 \\
& + OPERA \citep{opera2024cvpr}   & \underline{88.27} & \underline{87.52} & \underline{83.07} & 82.93 & \underline{80.77} & \underline{81.05} & \underline{84.04} & \underline{83.83} \\
& + INTER~\cite{dong2025inter}    & 86.27 & 85.63 & \textbf{84.31} & \underline{83.08} & 79.89 & 78.22 & 83.49 & 82.31 \\
\rowcolor{gray!20}
\cellcolor{white}& + V-ITI (ours) & \textbf{89.98} & \textbf{89.70} & 82.21 & \textbf{84.49} & \textbf{82.04} & \textbf{82.65} & \textbf{84.74} & \textbf{85.61} \\
\bottomrule
\end{tabular}
}
\label{tab:POPE}
\end{table*}
\begin{figure}[h]
\begin{center}
\includegraphics[width=0.95\linewidth]{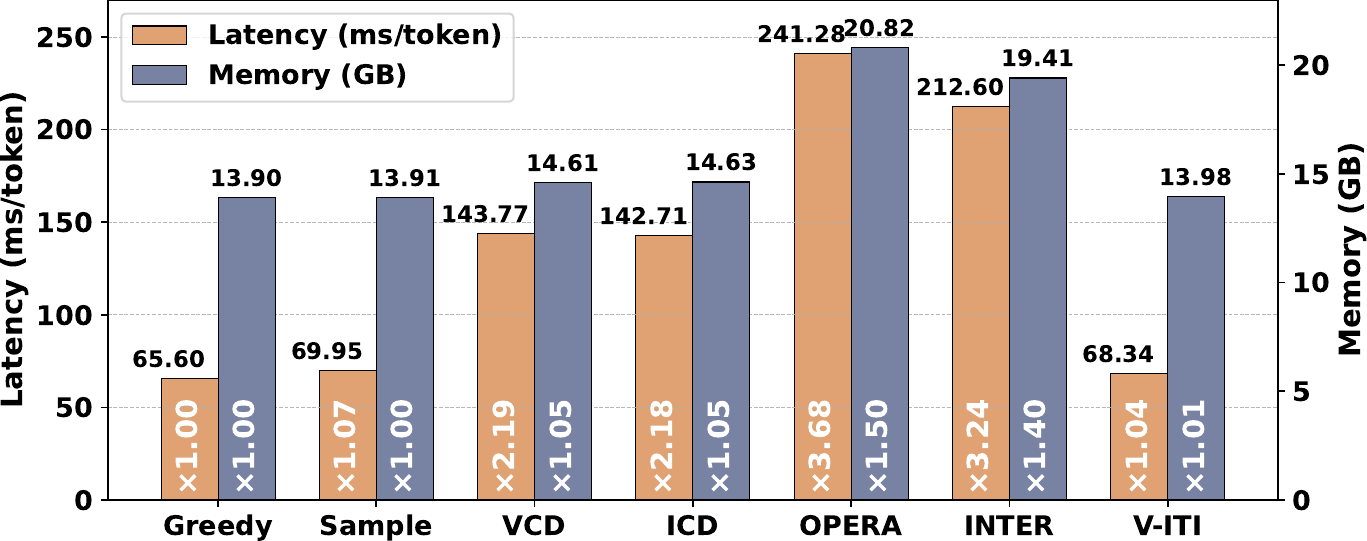}
\end{center} 
\caption{Efficiency comparison of V-ITI against baseline methods. V-ITI achieves near-greedy latency with minimal memory. Logits Intervention (VCD, ICD) average 2.19x latency of Greedy, while Attention Intervention (OPERA, INTER) average 3.46x.}
\label{fig5}
\end{figure}
\noindent\textbf{Time Complexity.} The VND introduces overhead from computing \( p_{\theta_l^h}(\boldsymbol{o}_l^h) \) via the sigmoid dot product, and the VRI from calculating \( \boldsymbol{\mu}_l^h \) (Eq.~\eqref{eq:visual_activation}) and modulated activations \( \boldsymbol{\hat{o}}_l^h \) (Eq.~\eqref{eq:gated_intervention}), both with time complexity \( O(n L H D) \). These linear overheads are negligible compared to the MLLM's self-attention complexity \( O(n^2 L H D) \). As shown in Fig.~\ref{fig5}, V-ITI achieves latency and memory usage almost identical to greedy decoding, with only marginal increases over greedy sampling. In contrast, VCD and ICD are more than 2× slower than V-ITI due to its dual-pass inference, and OPERA and INTER take over 3.2× longer, primarily because of beam search and candidate re-ranking. Notably, our method remains compatible with strategies such as Flash attention, enabling non-disruptive changes.

\begin{table}[t]
\centering
\caption{Performance across methods on the CHAIR benchmark.}
\resizebox{1.0 \linewidth}{!}{
\begin{tabular}{l|ccccc}
\toprule
 Methods  & CHAIR$_S$ $\downarrow$ & CHAIR$_I$ $\downarrow$ & Average $\downarrow$ & Recall $\uparrow$ \\ 
\midrule
LLaVA-1.5  & 50.0 & 15.4 & 32.7 & 77.1 \\
+ VCD      & 48.6 & 14.9 & 31.8 & 77.3 \\
+ ICD      & 56.2 & 16.3 & 36.3 & 16.3 \\
+ OPERA    & 47.8 & 14.6 & 31.2 & 76.8 \\
+ INTER    & 47.0 & 14.2 & 30.6 & 78.6 \\ \midrule
\rowcolor{gray!20}+ V-ITI & 46.1\textcolor{ForestGreen}{\,(+4.1)} & 13.5\textcolor{ForestGreen}{\,(+1.9)} & 29.8\textcolor{ForestGreen}{\,(+3.7)} & 80.4\textcolor{ForestGreen}{\,(+3.3)} \\ 
\bottomrule
\end{tabular}}
\label{tab:CHAIR} 
\end{table}
\section{Experiments}\label{sec:experiments}
This section empirically validates the effectiveness of V-ITI in hallucination mitigation and general-purpose task performance through tests conducted across diverse models, while addressing four key research questions:
\begin{itemize}
\item \textbf{RQ1}: How does V-ITI compare with SOTA methods in hallucination mitigation performance?
\item \textbf{RQ2}: How does V-ITI perform on general vision-language tasks compare with other methods?
\item \textbf{RQ3}: How does V-ITI preventing over-intervention compare with other methods?
\item \textbf{RQ4}: How do different parameter configurations affect V-ITI's overall performance?
\end{itemize}
\begin{table*}[t]
\small
\centering
\caption{General vision-language performance across multiple benchmarks. Higher is better for all metrics. The best and second-best results are in \textbf{bold} and \underline{underlined}.
Following the same setup as in the INTER work, OPERA is not adapted to the Qwen-VL model.}
\begin{tabular}{l c c c c c c}
\toprule
\multirow{2}{*}{Methods} 
& VizWiz-VQA 
& \multicolumn{3}{c}{MME} 
& LLaVA-Wild 
& MM-Vet \\
\cmidrule(lr){2-2} \cmidrule(lr){3-5} \cmidrule(lr){6-6} \cmidrule(lr){7-7}
& Accuracy $\uparrow$ 
& Perception $\uparrow$ 
& Cognition $\uparrow$ 
& Overall $\uparrow$ 
& Average $\uparrow$ 
& Total $\uparrow$ \\
\midrule

LLaVA-1.5-7B                    & 50.00 & 1508.97 & \underline{355.71} & 1864.68  & \underline{64.80} & 31.1 \\
+ VCD \citep{vcd2024cvpr}       & 44.90 & \underline{1515.01} & 357.86 & \underline{1872.87}  & 63.21 & 30.2 \\
+ ICD \citep{wang2024mitigating}& 37.62 & 1306.91 & 287.86 & 1594.77  & 56.90 & 25.9 \\
+ OPERA \citep{opera2024cvpr}   & \underline{50.76} & 1473.62 & 310.71 & 1784.34  & 64.31 & \textbf{32.0} \\
+ INTER~\cite{dong2025inter}    & 48.77 & 1502.35 & 336.18 & 1838.53  & 61.70 & 30.9\\\midrule
\rowcolor{gray!20}
+ V-ITI (ours)                  & \textbf{51.72} & \textbf{1518.32} & \textbf{369.03} & \textbf{1887.35}  & \textbf{65.44} & \underline{31.7} \\
\midrule\textbf{}
Qwen-VL-Chat                    & \underline{66.05} & 1442.79 & 342.14 & 1784.93  & \underline{68.50} & \underline{49.0} \\
+ VCD \citep{vcd2024cvpr}       & 34.54 & 1403.17 & 317.89 & 1721.06  & 53.77 & 34.6 \\
+ ICD \citep{wang2024mitigating}& 29.37 & 1472.06 & \textbf{361.69} & \textbf{1833.75}  & 56.60 & 31.7 \\
+ INTER~\cite{dong2025inter}    & 44.31 & \underline{1451.69} & 345.28 & 1796.97  & 59.84 & 42.3 \\\midrule
\rowcolor{gray!20}
+ V-ITI (ours)                  & \textbf{66.87} & \textbf{1477.14} & \underline{348.28} & \underline{1825.42}  & \textbf{68.84} & \textbf{49.8} \\
\bottomrule
\end{tabular}
\label{tab:general_benchmarks}
\end{table*}


\subsection{Experimental Setup}
\begin{table}[t]
\centering
\caption{Performance on the HallusionBench benchmark.}
\resizebox{1.0 \linewidth}{!}{
\begin{tabular}{l|ccccc}
\toprule
 Methods & fACC $\uparrow$ & qACC $\uparrow$ & ${easy}$A $\uparrow$ & ${hard}$A $\uparrow$ \\ 
\midrule
LLaVA-1.5   & 17.9 & 8.13 & 36.0 & 36.7  \\
+ VCD       & 13.9 & 11.4 & 33.0 & 34.7  \\
+ ICD       & 13.9 & 8.35 & 36.9 & 33.5  \\
+ OPERA     & 16.2 & 5.49 & 37.6 & 35.4  \\
+ INTER     & 15.8 & 8.21 & 36.9 & 34.5  \\ \midrule
\rowcolor{gray!20}+ V-ITI &17.9\textcolor{gray!80}{\,(+0.0)}  &10.27\textcolor{ForestGreen}{\,(+2.14)}  &36.6\textcolor{ForestGreen}{\,(+0.6)}  &37.0\textcolor{ForestGreen}{\,(+0.3)} \\ 
\bottomrule
\end{tabular}}
\label{tab:HallusionBench} 
\end{table}
\textbf{Evaluation Benchmarks.} 
To rigorously assess the effectiveness of our proposed V-ITI, we conduct a comprehensive set of experiments across two categories of benchmarks: three hallucination evaluation benchmarks (CHAIR~\cite{rohrbach2018object}, POPE~\cite{li2023evaluating}, HallusionBench~\cite{guan2024hallusionbench}) and five general vision-language task benchmarks (VizWiz-VQA~\cite{gurari2018vizwiz}, MME~\cite{fu2023mme}, MMBench \cite{liu2023mmbench}, LLaVA-Wild~\cite{liu2024visual}, MM-Vet~\cite{2024MMVet}). More details are in Appendix ~\ref{apx:B.1}.

\noindent\textbf{Baselines.} We use four representative baselines tested on LLaVA-1.5~\cite{liu2024visual} and Qwen-VL~\cite{bai2023qwen} from two mainstream inference-time paradigms for MLLM hallucination mitigation: Logits Intervention (ICD \cite{wang2024mitigating}, VCD \cite{vcd2024cvpr}), and Attention Intervention (OPERA \cite{opera2024cvpr}, INTER~\cite{dong2025inter}). All the methods follow the default configurations from the original papers.  More implementation details are in Appendix~\ref{apx:B.2}.

\subsection{Results on Hallucination Benchmarks (RQ1)}
We evaluate V-ITI across three hallucination benchmarks: CHAIR, POPE, and HallusionBench. V-ITI demonstrates the strongest overall performance across all benchmarks. As shown in Tab.~\ref{tab:CHAIR}, V-ITI reduces the average hallucination score by 3.7 points, from 32.7 to 29.8, representing an 11.3\% relative reduction, while also increasing Recall by 3.3\% compared to LLaVA-1.5. In POPE, presented in Tab.~\ref{tab:POPE}, V-ITI consistently outperforms LLaVA-1.5 across all datasets. Notably, V-ITI achieves the best performance in the average accuracy column on all datasets, with improvements of 7.13, 7.31, and 5.75 points, respectively, over the baseline. Additionally, on A-OKVQA and GQA, V-ITI attains the highest average F1-score, improving by 7.27 and 6.48 points over LLaVA-1.5, despite the competitive baseline. Finally, as shown in Tab.~\ref{tab:HallusionBench}, HallusionBench further validates V-ITI's effectiveness, with improvements of 2.14 in qACC and 0.3 in ${hard}$A, proving robustness against both object-level and reasoning-driven hallucinations.

\begin{figure}[t]
\begin{center}
\includegraphics[width=0.8\linewidth]{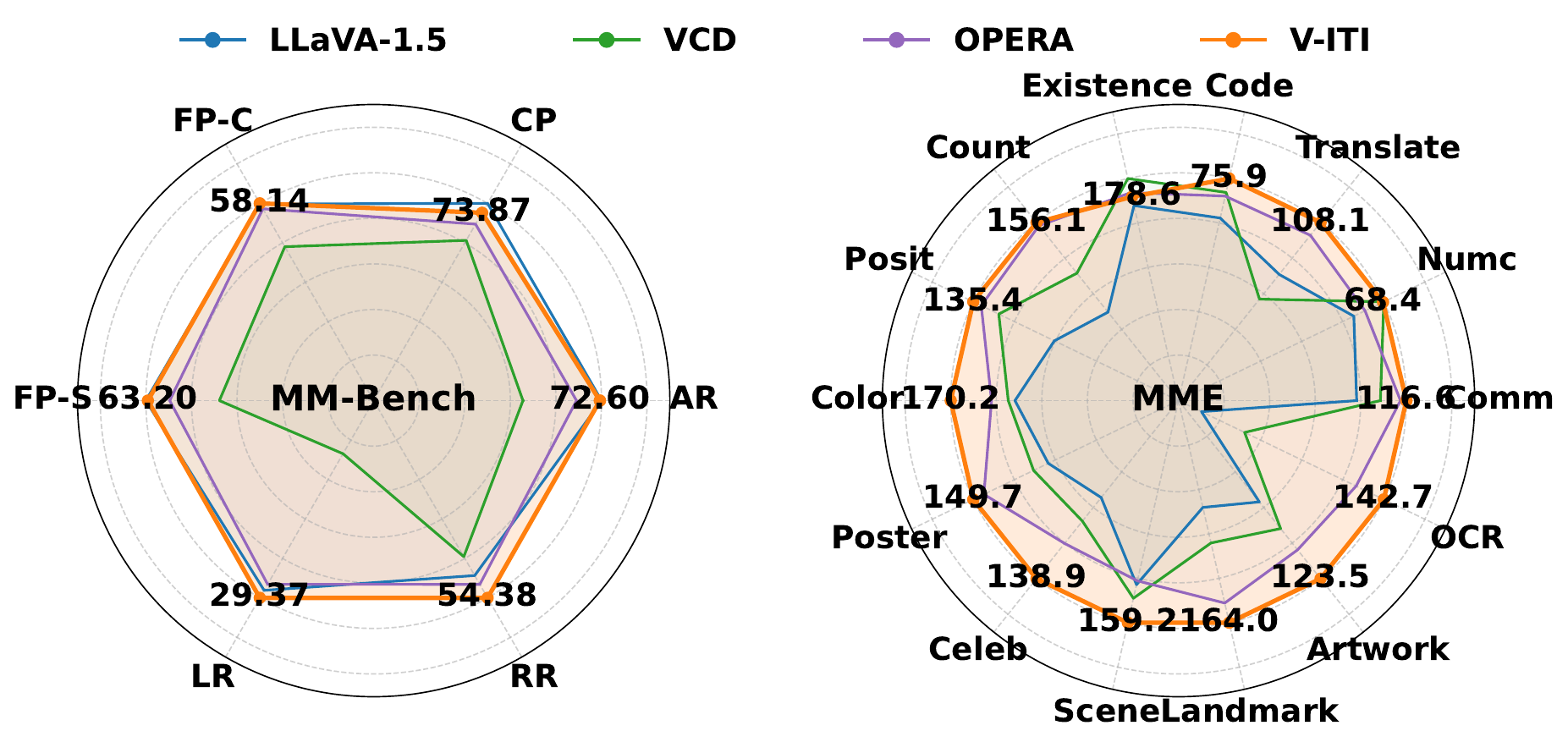}
\end{center} 
\caption{Radar charts on MM-Bench and MME benchmark.}
\label{fig6}
\end{figure}
\begin{figure*}[t]
\begin{center}
\includegraphics[width=0.95\linewidth]{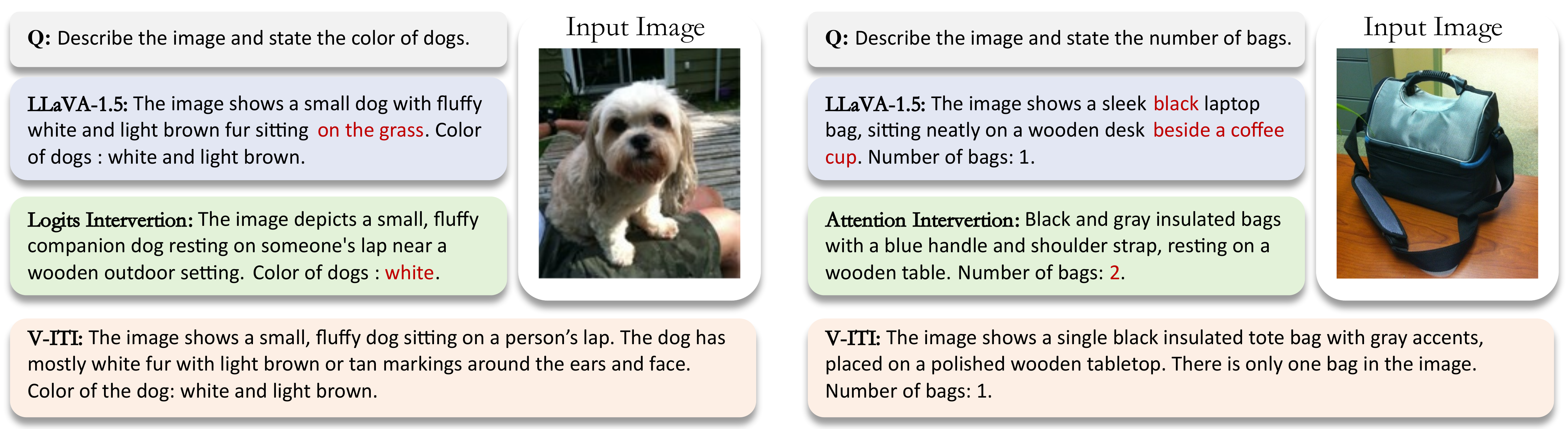}
\end{center} 
\caption{Illustration of over-intervention prevention by our proposed V-ITI with two samples. Hallucinated issues are highlighted in \textcolor{BrickRed}{red}. In the left sample, LLaVA-1.5 hallucinates the dog’s location, and the Logits Intervention method incorrectly alters the dog's color. V-ITI accurately describes the dog’s position and color. In the right sample, LLaVA-1.5 misidentifies the bag's color and adds a non-existent coffee cup, while Attention Intervention overestimates the number of bags. V-ITI avoids these errors and provides a correct description.}
\label{fig7}
\end{figure*}
\subsection{Results on General Tasks (RQ2)}
We evaluate V-ITI across five general-purpose benchmarks: VizWiz-VQA, MME, LLaVA-Wild, and MM-Vet. 
As shown in Tab.~\ref{tab:general_benchmarks}, V-ITI achieves the highest MME Perception score on both LLaVA-1.5 and Qwen-VL, surpassing the state-of-the-art by 9.35 and 25.45 points, respectively. 
Meanwhile, V-ITI delivers the best performance on VizWiz-VQA, improving over LLaVA-1.5 by 3.44\% and over Qwen-VL by 1.24\%. Over two CD-based methods, V-ITI averagely improves by 8.97\% on LLaVA-Wild and 13.01\% on MM-Vet. Notably, the detailed breakdown of MM-Bench and MME in Fig.~\ref{fig6} reveals that these improvements are consistent across all sub-tasks. Specifically, V-ITI excels in Existence, Count, Color, and Scene, showcasing its robustness in diverse visual reasoning tasks. In contrast to many existing methods that prioritize hallucination mitigation at the cost of general capability, V-ITI achieves improvements in both areas simultaneously, underscoring its effectiveness as a lightweight, comprehensive intervention. 
\begin{table}[t]
\centering
\caption{Ablation study on LLaVA-1.5 across four benchmarks. Our full V-ITI outperforms two variants: \textbf{w/o VND} (no Visual Neglect Detector) and \textbf{w/o VRI} (no Visual Recall Intervenor), highlighting the importance of both components.}
\label{tab:ablation}
\resizebox{1.0 \linewidth}{!}{
\begin{tabular}{lcccc}
\toprule
\textbf{Method} & \textbf{POPE}$\uparrow$ & \textbf{CHAIR}$\downarrow$ & \textbf{VizWiz-VQA}$\uparrow$ & \textbf{MME}$\uparrow$ \\
\midrule
LLaVA-1.5         & 79.83 & 32.7 & 50.00 & 1864.48 \\
\midrule
V-ITI (Ours)      & \textbf{87.00} & \textbf{29.8} & \textbf{51.72} & \textbf{1887.35} \\ 
w/o VND       & \underline{84.31} & \underline{30.4} & 49.82 & \underline{1877.52} \\
w/o VRI       & 80.12 & 34.1 & \underline{50.47} & 1852.47 \\ 
\bottomrule
\end{tabular}}
\end{table}

\subsection{Over-Intervention Preventation (RQ3)}
To validate our model's over-intervention prevention, we selected two samples from VizWiz-VQA. As shown in Fig.~\ref{fig7}, for the left sample, LLaVA-1.5 hallucinated the dog as \textit{``on the grass''} but correctly identified its color. Logits Intervention reduced some spatial hallucination but misidentified the dog’s color as \textit{``white''}. In contrast, V-ITI correctly described the dog as sitting on a lap with white and light brown fur. For the right sample, the original model misidentified the bag's color as \textit{``black''} and hallucinated a \textit{``coffee cup''}. The Attention Intervention method over-interpreted the number of bags as \textit{``2''}, while V-ITI accurately recognized the bag as a single one with black and gray accents and described its location correctly. These results demonstrate that V-ITI effectively prevents over-intervention while maintaining accuracy in vision-language tasks.
\begin{figure}[t]
\begin{center}
\includegraphics[width=0.8\linewidth]{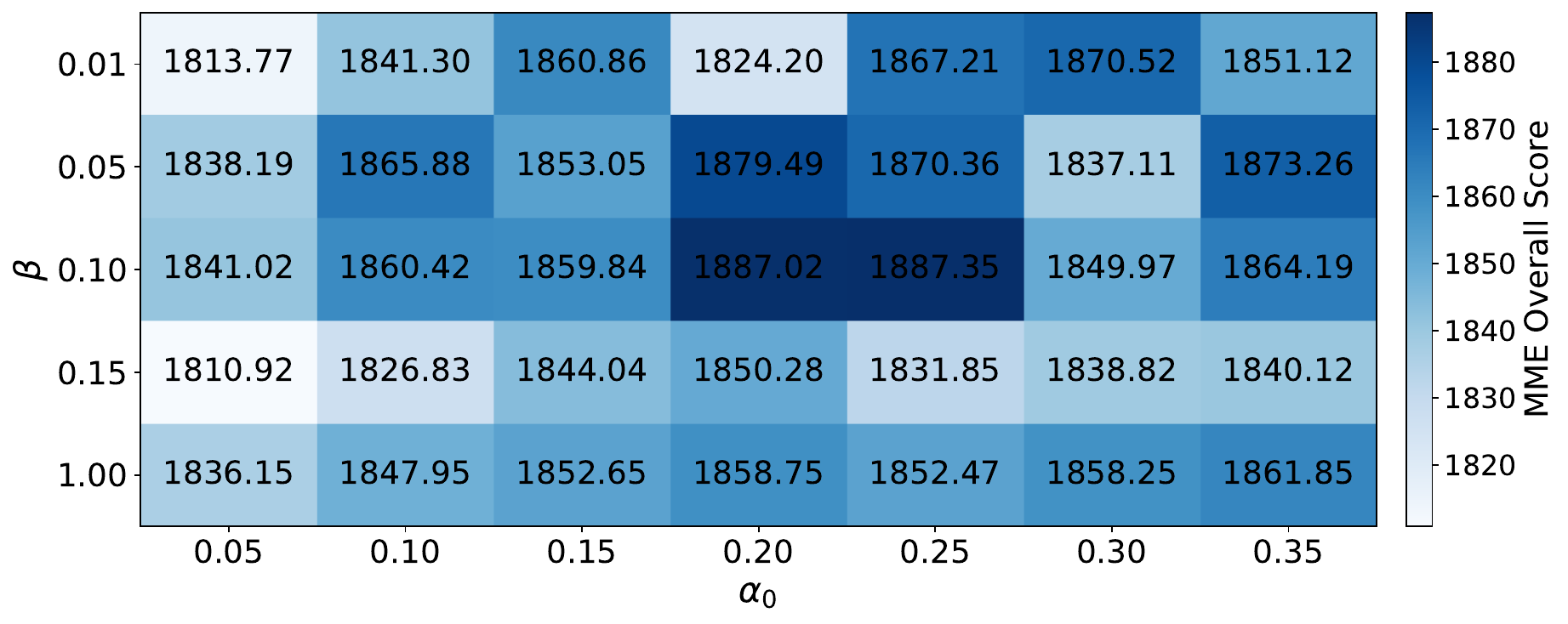}
\end{center} 
\caption{Heatmap of MME scores across different configurations, showing the impact of intervention strength \( \alpha_0 \) and probe selection parameter \( \beta \). The results highlight the importance of selecting an optimal probe set determining when to intervene.
}
\label{fig8}
\end{figure}
\subsection{Ablation Studies (RQ4)}
\noindent\textbf{Component Ablation.}  
We evaluate two key design choices in our framework.  
(i) \textbf{V-ITI w/o VND}: We remove the Visual Neglect Detector and apply intervention unconditionally to all attention heads across the selected layers, mimicking a naive ``always-on'' strategy.  
(ii) \textbf{V-ITI w/o VRI}: We retain the detector but replace our Visual Recall Intervenor with a simplistic intervention  $\mu_l^h = \theta_l^h$ adopted from naive inference-time intervention. Results in Tab.~\ref{tab:ablation} show that both variants underperform the full version. \textbf{V-ITI w/o VND} introduces unnecessary perturbations, causing over-intervention and new hallucinations, while \textbf{V-ITI w/o VRI} fails to recover accurate visual semantics due to the lack of content-aware modulation. These results confirm that \textit{both VND and VRI are essential} for hallucination suppression.

\noindent\textbf{Hyperparameter Sensitivity.}  
We further analyze the impact of intervention strength \( \alpha_0 \), and the probe selection parameter \( \beta \). We vary \( \alpha_0 \) from 0.05 to 0.40 and evaluate \( \beta \) ranging from 1\% to 25\%, along with 50\% and 100\%. Fig.~\ref{fig8} presents a heatmap of the \textbf{Overall} scores on the MME benchmark across these configurations. The results show a peak at \( \alpha_0 \in [0.20, 0.25] \) and \( \beta = 10\% \), indicating balanced intervention most effectively addresses visual neglect.

\section{Conclusion}
In this work, we identify the overlooked issue of over-intervention in hallucination mitigation for MLLMs. To address this, we propose V-ITI, a two-module framework that tackles the critical questions of when and how to intervene. The Visual Neglect Detector determines when intervention is needed via head-level activation paterns, while the Visual Recall Intervenor refines attention toward relevant visual tokens. Extensive experiments demonstrate that V-ITI effectively mitigates hallucinations 
while preserving strong performance on general vision-language tasks.
\vspace{-15pt}
{
    \small
    \bibliographystyle{ieeenat_fullname}
    \bibliography{main}
}
\clearpage
\setcounter{page}{1}
\maketitlesupplementary

\section{Experimental Setup}\label{apx:B}
\subsection{Evaluation Benchmarks}\label{apx:B.1}
In this section, we provide additional details regarding the benchmarks used to comprehensively evaluate the effectiveness of our proposed V-ITI. Below, we outline the three hallucination evaluation benchmarks (CHAIR~\cite{rohrbach2018object}, POPE~\cite{li2023evaluating}, HallusionBench~\cite{guan2024hallusionbench}) and five general vision-language task benchmarks (VizWiz-VQA~\cite{gurari2018vizwiz}, MME~\cite{fu2023mme}, MMBench \cite{liu2023mmbench}, LLaVA-Wild~\cite{liu2024visual}, MM-Vet~\cite{2024MMVet}) included in our experiments.
\begin{table*}[t]
\centering
\caption{Performance of Qwen-VL on POPE. Results are averaged across the MS-COCO, A-OKVQA, and GQA. The best and second-best results are in \textbf{bold} and \underline{underlined}. OPERA relies on the older environment, and experiments with Qwen-VL were not disclosed.}
\begin{tabular}{l c c c c c c c}
\toprule
\multirow{2}{*}{Methods} 
& \multicolumn{2}{c}{Random} 
& \multicolumn{2}{c}{Popular} 
& \multicolumn{2}{c}{Adversarial} \\
\cmidrule(lr){2-3} \cmidrule(lr){4-5} \cmidrule(lr){6-7}
& Accuracy $\uparrow$ & F1-score $\uparrow$ 
& Accuracy $\uparrow$ & F1-score $\uparrow$ 
& Accuracy $\uparrow$ & F1-score $\uparrow$ \\
\midrule
Qwen-VL                          & 88.2 & 87.9 & 82.4 & 83.1 & 77.2 & 78.9 \\
+ VCD~\citep{vcd2024cvpr}        & \underline{89.1} & \textbf{88.4} & 83.0 & 84.1 & \underline{78.8} & \underline{80.1} \\
+ ICD~\citep{wang2024mitigating} & 88.9 & 88.1 & \underline{83.2} & \textbf{84.5} & 78.1 & 79.2 \\
+ INTER~\cite{dong2025inter}     & 87.8 & 87.6 & 82.7 & 82.4 & 76.4 & 77.8 \\\midrule
\rowcolor{gray!20}
+ V-ITI (ours)                   & \textbf{89.8} & \underline{88.2} & \textbf{83.5} & \underline{84.4} & \textbf{80.4} & \textbf{82.1} \\
\bottomrule
\label{tab:POPE_Qwen} 
\end{tabular}
\end{table*}

\begin{table}[!ht]
\centering
\caption{Performance of Qwen-VL on the CHAIR benchmark.}
\resizebox{1.0 \linewidth}{!}{
\begin{tabular}{l|ccccc}
\toprule
 Methods  & CHAIR$_S$ $\downarrow$ & CHAIR$_I$ $\downarrow$ & Average $\downarrow$ & Recall $\uparrow$ \\ 
\midrule
Qwen-VL    & 46.0 & 12.5 & 29.3 & 64.3\\
+ VCD      & 46.8 & 12.3 & 29.6 & 67.9 \\
+ ICD      & 45.0 & 14.3 & 29.7 & 47.6 \\
+ INTER    & 48.7 & 16.4 & 32.6 & 56.2 \\ \midrule
\rowcolor{gray!20}+ V-ITI & 44.2\textcolor{ForestGreen}{\,(+1.8)} & 12.5\textcolor{ForestGreen}{\,(+0.0)} & 28.4\textcolor{ForestGreen}{\,(+0.9)} & 66.4\textcolor{ForestGreen}{\,(+2.1)} \\ 
\bottomrule
\end{tabular}}
\label{tab:CHAIR_Qwen} 
\end{table}

\noindent\textbf{Hallucination Evaluation Benchmarks} includes:
\begin{itemize}
    \item \textbf{CHAIR}~\cite{rohrbach2018object} stands for Caption Hallucination Assessment with Image Relevance, which evaluates the accuracy of image captions by identifying hallucinated objects, i.e., objects that are mentioned in the caption but not present in the image. It includes two versions: CHAIR$_S$ (sentence-level evaluation) and CHAIR$_I$ (instance-level evaluation). For evaluation, we use the \textit{val2014} split of the MSCOCO dataset \cite{lin2014microsoft}, which contains annotations for 80 object categories. A random sample of 500 images is selected from the dataset, and the prompt ``Please describe this image in detail.'' is used to generate captions with a MLLM. The CHAIR metrics are computed by: 
    \[
    \text{CHAIR}_{S} = \frac{ \vert\{ \text{sentences with hallucinated objects}\}\vert }{\vert\{\text{all sentences}\}\vert }
    \]
    \[
    \text{CHAIR}_{I} = \frac{ \vert\{ \text{hallucinated objects}\}\vert }{\vert\{\text{all objects mentioned}\}\vert }
    \]  

    \item \textbf{POPE}~\cite{li2023evaluating} is short for Polling-based Object Probing Evaluation, which is designed to assess object hallucinations in MLLMs. Unlike traditional caption-based approaches, POPE frames hallucination detection as a binary task by posing yes-or-no questions about the presence of specific objects in an image, such as ``Is there a chair in this image?'' To clarify the binary nature of the task, the prompt is followed by ``Please answer yes or no''. The objects referenced in the questions are selected using three sampling methods: random, popular, and adversarial, and performance is evaluated across all of these options. POPE's evaluation is based on four metrics: \textbf{Accuracy}, \textbf{Precision}, \textbf{Recall}, and \textbf{F1 Score}, offering a comprehensive assessment of the model’s ability to detect actual objects and avoid hallucinations.
    \item \textbf{HallusionBench}~\cite{guan2024hallusionbench} is a comprehensive benchmark designed for evaluating image-context reasoning in MLLMs. It presents significant challenges to advanced models, emphasizing nuanced understanding and interpretation of visual data. The benchmark includes 346 images paired with 1129 questions, all crafted by human experts. A novel structure for the visual questions is introduced to establish control groups, enabling a quantitative analysis of the models’ response tendencies, logical consistency, and various failure modes. 
\end{itemize}

\noindent\textbf{General Vision-Language Task Benchmarks} includes:
\begin{itemize}
    \item \textbf{VizWiz-VQA}~\cite{gurari2018vizwiz} is a visual‑question‑answering dataset collected from a natural setting where blind or visually impaired users take mobile phone photos and ask spoken questions about them. The dataset comprises over 31,000 image‑question pairs, each with ten crowdsourced answers. Compared to traditional VQA datasets, the images in VizWiz are often of poorer quality, the questions are more conversational, and there is a substantial portion of unanswerable questions because the required information may be absent from the image. This makes VizWiz a particularly challenging benchmark for VQA models and encourages development of algorithms that can handle real‑world, assistive scenarios.
    \item \textbf{MME}~\cite{fu2023mme} stands for MLLM Evaluation, which is a comprehensive benchmark designed to evaluate the capabilities of MLLMs. The evaluation is divided into two major categories: \textbf{Perception} and \textbf{Cognition}.The Perception category includes fine-grained tasks such as existence, count, position, color, poster identification, celebrity recognition, scene and landmark identification, artwork recognition, and Optical Character Recognition. These tasks focus on the model's ability to understand and process visual information at a detailed level.The Cognition category focuses on tasks that require higher-level reasoning, including commonsense reasoning, numerical calculations, text translation, and code reasoning. These tasks test the model's ability to reason and understand complex relationships beyond just raw visual data. All questions in this benchmark are structured to be answered with a simple ``yes'' or ``no'', providing a clear and straightforward way to measure the model's performance. The benchmark quantifies performance using accuracy and accuracy+, offering an extensive evaluation of the model's perception and cognitive reasoning abilities.
    \item \textbf{MMBench}~\cite{liu2023mmbench} is a bilingual benchmark designed to evaluate MLLMs across a broad spectrum of multimodal skills. It includes over 3,200 multiple‑choice questions covering 20 fine‑grained ability dimensions ranging from perception tasks like object localization and image attribute recognition to reasoning tasks such as logic, relation and social reasoning. To ensure robust evaluation, MMBench introduces the CircularEval strategy, which presents each question multiple times with shuffled answer choices, and uses a large language model to convert free‑form MLLMs outputs into predefined choice labels. The benchmark supports both English and Chinese versions of the questions, enabling direct cross‑language comparison of model performance. 
    \item \textbf{LLaVA-Wild}~\cite{liu2024visual} is a benchmark consisting of 24 diverse images paired with 60 human-crafted questions to evaluate large MLLMs on challenging and varied visual tasks, including conversation, description, and complex reasoning in real-world settings.
    \item \textbf{MM-Vet}~\cite{2024MMVet} short for Multimodal Veterinarian, evaluates MLLMs on integrated vision–language capabilities. It defines six core VL capabilities (recognition, OCR, knowledge, spatial awareness, language generation, math) and examines their 16 possible integrations in complex tasks. MM‑Vet uses an LLM‑based evaluator to score open‑ended outputs across diverse question types with a unified metric.  
\end{itemize}

\begin{table}[!ht]
\begin{flushright}
\caption{Performance of Qwen-VL on the HallusionBench.}
\resizebox{1.0 \linewidth}{!}{
\begin{tabular}{l|ccccc}
\toprule
 Methods & fACC $\uparrow$ & qACC $\uparrow$ & ${easy}$A $\uparrow$ & ${hard}$A $\uparrow$ \\ 
\midrule
Qwen-VL     & 6.65 & 5.93 & 31.43 & 24.88  \\
+ VCD       & 6.88 & 6.12 & 34.28 & 22.34  \\
+ ICD       & 5.98 & 5.49 & 29.87 & 23.65  \\
+ INTER     & 4.73 & 4.98 & 28.85 & 23.26  \\ \midrule
\rowcolor{gray!20}+ V-ITI &7.14\textcolor{ForestGreen}{\,(+0.49)}  &6.05\textcolor{ForestGreen}{\,(+0.12)}  &33.27\textcolor{ForestGreen}{\,(+1.84)}  &25.21\textcolor{ForestGreen}{\,(+0.33)} \\ 
\bottomrule
\end{tabular}}
\label{tab:HallusionBench_Qwen} 
\end{flushright}
\end{table}
\subsection{Baselines and Implementation Details}\label{apx:B.2}
In this section, we provide additional details regarding the baselines and implementation settings used to evaluate the performance of our method. We utilized two mainstream MLLMs, LLaVA-1.5~\cite{liu2024visual} and Qwen-VL~\cite{bai2023qwen}, as the core models. For our baselines, we compare our methods with four representative methods focusing on two types of interventions: Logits Intervention (VCD~\cite{vcd2024cvpr}, ICD~\cite{wang2024mitigating}), and Attention Intervention (OPERA~\cite{opera2024cvpr}, INTER~\cite{dong2025inter}). All configurations for the compared methods adhere to the default settings specified in the original papers.

\noindent\textbf{Logits Intervention} includes:
\begin{itemize}
    \item \textbf{VCD}~\cite{vcd2024cvpr}: Visual Contrastive Decoding is a method designed to mitigate object hallucinations by improving the alignment between the generated captions and the visual content. It operates by leveraging contrastive learning techniques during the decoding process to ensure that the generated output is more grounded in the image's visual features. For VCD tests, we set do\_sample=True, temperature=1, noise\_step=500, plausibility constraint hyperparameter $\lambda$=0.1, and the contrastive emphasis parameter $\alpha$=1, following the default parameter settings from the original code and literature.
    \item \textbf{ICD}~\cite{wang2024mitigating}: Instruction Contrastive Decoding inherits VCD's concept but applies the perturbation in the instruction part instead of the image. It shows that appending role‑prefixes (i.e., “instruction disturbance”) significantly exacerbates hallucinations by increasing multimodal alignment uncertainty. In response, the method contrasts the token probability distributions from the original instruction and the disturbed instruction during inference, thereby identifying and subtracting “hallucinated” concepts. We use the default parameter settings as specified in the original code and literature and set use\_nucleus\_sampling=True, num\_beams=1, top\_p=1.
\end{itemize}

\noindent\textbf{Attention Intervention} includes:
\begin{itemize}
    \item \textbf{OPERA}~\cite{opera2024cvpr} alleviates hallucination in MLLMs by addressing over-trust in the model's predictions. The method applies an over-trust penalty to the attention during beam search, discouraging the generation of overly confident sequences, and introduces a attention retrospection-allocation strategy, which reverts to earlier tokens and reallocates candidates when hallucinations are detected. For OPERA tests, we configured the following parameters: beam=5, sample=True, scale factor=50, threshold=15, num attention candidates=5, and penalty weights=1. 
    \item \textbf{INTER}~\cite{dong2025inter} explicitly guides MLLMs to better utilize their captured multimodal interactions. Specifically, INTER consists of two key modules: the Interactive Guided Locator, which identifies key tokens that significantly influence response accuracy, and the Interaction Probability Modifier, which adjusts the sampling probability of these tokens to strengthen their reliance on multimodal reasoning. We used the beam‑search based INTER experimental settings from the original paper for our evaluations.
\end{itemize}
\noindent\textbf{V-ITI Implementation details.}  All V-ITI tests were conducted using a greedy decoding approach, with do sample=False, temperature=0, beam=1. Additionally, we introduce two more hyperparameters: the intervention strength $\alpha_0$=0.20, which governs the magnitude of the intervention applied during inference, and the probe selection parameter $\beta$=0.10, which regulates the proportion of selected probes. The value of $\alpha_0$ controls the intensity of the intervention, while $\beta$ determines the priority when selecting probes, allowing for more refined control over the intervention process. V-ITI is implemented in the LLaVA-1.5~\cite{liu2024visual} environment, using Python 3.10 and run on eight H100 GPUs.  
\section{Additional Experiments and Results}\label{apx:C} 
In this section, we disclose additional experimental results for the Qwen-VL model, which were not mentioned in the main text. Due to OPERA’s reliance on older versions of Torch and Transformers, it was incompatible with Qwen-VL, and thus experiments with this model were not conducted. The results for Qwen-VL on the POPE, CHAIR and HallusionBench benchmarks are shown in Tab.\ref{tab:POPE_Qwen} Tab.~\ref{tab:CHAIR_Qwen}, and Tab.\ref{tab:HallusionBench_Qwen}. Our method consistently outperforms the Qwen-VL baseline across all benchmarks and metrics while remaining among the top-performing existing approaches, further support the effectiveness of our V-ITI method.


\end{document}